\newtheorem{theorem}{Theorem}
\title{A Log-linear Gradient Descent Algorithm for Unbalanced Binary Classification using the All Pairs Squared Hinge Loss}
\author{Kyle Rust, Northern Arizona University
\and Toby Hocking, Northern Arizona University}
\begin{document}

% \author{%
%   \name Kyle Rust \email krr387@nau.edu  \\%jmlr:tdhock
%   \addr School of Informatics, Computing, and Cyber Systems\\
%   Northern Arizona University\\
%   Flagstaff, AZ 86001, USA
%   \AND
%   \name Toby Dylan Hocking \email toby.hocking@nau.edu \\%jmlr:tdhock
%   \addr School of Informatics, Computing, and Cyber Systems\\
%   Northern Arizona University\\
%   Flagstaff, AZ 86001, USA
%   }

% \editor{TBD}

% \textbf{Condensed running title of 50 characters or less:} Sort-Based Surrogate for AUC Optimization

% \textbf{Five keywords:} AUC, ROC, loss functions, gradient descent, optimization.

\maketitle

\begin{abstract}
Receiver Operating Characteristic (ROC) curves are plots of true positive rate versus false positive rate which are used to evaluate binary classification algorithms.
Because the Area Under the Curve (AUC) is a constant function of the predicted values, learning algorithms instead optimize convex relaxations which involve a sum over all pairs of labeled positive and negative examples.
Na\" ive learning algorithms compute the gradient in quadratic time, which is too slow for learning using large batch sizes.
We propose a new functional representation of the square loss and squared hinge loss, which results in algorithms that compute the gradient in either linear or log-linear time, and makes it possible to use gradient descent learning with large batch sizes.
In our empirical study of supervised binary classification problems, we show that our new algorithm can achieve higher test AUC values on imbalanced data sets than previous algorithms, and make use of larger batch sizes than were previously feasible.
\end{abstract}

\section{Introduction}
\label{sec:introduction}

Binary classification is an important problem in many areas such as computer vision, natural language processing, and bioinformatics.
Binary classification learning algorithms result in a function that outputs a real-valued predicted score (larger for more likely to be in the positive class).
The prediction accuracy of learned binary classification models can be quantified using the zero-one loss, which corresponds to thresholding the predicted score at zero. 
Because it only considers one prediction threshold (the default), this evaluation metric can be problematic and/or misleading in some cases (data sets with extreme class imbalance, models with different false positive rates). 
A more comprehensive and fair evaluation method involves the Receiver Operating Characteristic (ROC) Curve, which involves plotting True Positive Rate versus False Positive Rate, for all thresholds of the predicted score \citep{egan1975signal}.
The Area Under the ROC Curve (AUC) takes values between zero and one; constant/random/un-informed predictions yield AUC=0.5 and a set of perfect predictions would achieve AUC=1.
It is therefore desirable to create learning algorithms that maximize AUC, and that criterion is often used for hyper-parameter selection.
However, for gradient descent learning it is impossible to directly use the AUC since it is a piecewise constant function of the predicted values (the gradient is zero almost everywhere).
Various authors have proposed to work around this issue by using convex relaxations of the Mann-Whitney statistic \citep{bamber1975area}, which involves a double sum over all pairs of positive and negative examples. 
However, the utility of this method is limited to relatively small number of examples $n$, because of the quadratic $O(n^2)$ time complexity of the na\"ive implementation of the double sum.
This paper proposes a new, more efficient way to compute this double sum, for the special case of the square loss and squared hinge loss.
The main new idea is to use a functional representation of the loss, which results in a new algorithm which computes the all pairs square loss in linear $O(n)$ time, and a new algorithm which computes the all pairs squared hinge loss in log-linear $O(n\log n)$ time.

\subsection{Contributions and organization}

Our main contributions are new algorithms for computing the loss and gradient in sub-quadratic time, for loss functions based on square loss and squared hinge loss relaxations of the AUC.
In Section~\ref{sec:model} we describe the new functional representation of the loss.
In Section~\ref{sec:algorithms} we provide proofs that this functional representation can be used to compute the loss in sub-quadratic time.
In Section~\ref{sec:results} we provide an empirical study of supervised binary classification problems, and show that our method compares favorably with previous methods in terms of computation time and test AUC.
Section~\ref{sec:discussion} concludes with a discussion of the significance and novelty of our findings.

\subsection{Related work}
\label{sec:related-work}

There are several reviews about algorithms for dealing with class imbalance \citep{japkowicz2002class,batista2004study,kotsiantis2007supervised,he2009learning,krawczyk2016learning,haixiang2017learning,johnson2019survey}, among which a primary evaulation method is 
ROC curve analysis, which originates in the signal processing literature \citep{egan1975signal}. 
\paragraph{Loss functions which sum over examples, including re-weighting methods.}
One approach for AUC optimization involves the use of a standard loss function with example-specific weights which depend on the class imbalance \citep{ferri2002learning, cortes2004auc, cortes2007magnitude, Scott2012, wang2015optimizing}. 
\citet{cui2019class} proposed a re-weighting method based on the effective number of samples.
\citet{cao2019learning} proposed a margin-based loss which is aware of the label distribution.

\paragraph{Methods based on over-sampling and under-sampling.}
\citet{chawla2002smote} proposed a Synthetic Minority Over-sampling TEchnique (SMOTE), and there are related methods \citep{chawla2003smoteboost, han2005borderline,seiffert2009rusboost}.
Similarly, there are algorithms based on under-sampling \citep{liu2008exploratory}.
\cite{he2008adasyn} proposed an adaptive synthetic sampling algorithm (ADASYN).
More generally these techniques are referred to as data augmentation, for which there have been several surveys \citep{shorten2019survey}.

% \citet{provost1997analysis} brought a convex-hull method for ROC evaluation.
% \citet{ling2003auc} gave a proof of AUC consistency, while \citet{hand2009measuring} showed it is incoherent with respect to classification costs, and proposed the alternative ``H measure''.
% Losses can often be interpreted as a convex relaxation of the zero-one loss summed over all examples. 
% Other algorithms to estimate the global AUC have been proposed \citep{ rakotomamonjy2004optimizing, herschtal2004optimising, herschtal2006area,  t}. 
% \citet{Han2010} posits a linear model and algorithm that maximizes the AUC utilizing active learning. \citet{zhao2011online} created an AUC-maximizing online learning algorithm. \citet{Menon2013} examined class imbalance and statistical consistency using a mean of true positive and true negative rates.
% used that approxmation to propose the AUCtron linear-model algorithm.  \citet{ying2016stochastic} suggested solving a saddle point problem involving a pairwise square loss for a stochastic AUC optimization, and  \citet{yuan2020auc} proposed extending this to the squared hinge loss.
\renewcommand{\arraystretch}{1.25}
\begin{table}[ht]
\begin{center}
\begin{tabular}{ |c|c|c|c|c|c|} 
 \hline

 Paper & Degree & Hinge & Proof & Solution\\
 \hline
 \citet{pahikkala_tsivtsivadze_airola_jarvinen_boberg_2009} & Square  & False & False & Functional \\ %TODO Verfiy no proof
 \hline
 \citet{joachims2005multivariate} & Linear & True & True & Functional \\ 
 \hline
 \citet{calders2007efficient} & Polynomial & False & True & Functional \\ 
 \hline
 \citet{ying2016stochastic} & Square & False & True &Min-Max \\
 \hline
 \citet{yuan2020auc} & Square & True & True & Min-Max \\
 \hline
 This Work & Square & True & True & Functional\\
 \hline
\end{tabular}
\end{center}
\caption{A table outlining the novel contributions of work relating to the sub-quadratic computation of the linear, square, and squared hinge loss functions. The table outlines the degree of the loss function, whether a hinge is included in the loss function, if the associated paper includes a proof, and the context how the loss is represented.}
\label{table:related-work} 
\end{table}

\paragraph{Loss functions which sum over all pairs of positive and negative examples.}
\citet{bamber1975area} showed the equivalence of the ROC-AUC and the Mann-Whitney test statistic \citet{mann1947test}, which has led to many proposed algorithms based on objective functions that are convex surrogates of the Mann-Whitney statistic \citep{yan2003optimizing, castro2008optimization, ying2016stochastic, yuan2020auc, calders2007efficient, tino2022rankings-problems-review}. 
\citet{joachims2005multivariate} proposed a support vector machine algorithm (in quadratic time) to maximize AUC based on a pairwise loss function, and \citet{freund2003efficient} proposed an approach based on boosting. \citet{joachims2006linearSVM} proposed another support vector machine algorithm that could be trained in linear time; this work was generalized by \citet{AIROLA20111328} to make use of real-valued utility scores. 
\citet{lee2014ranksvm} extends this work by investigating methods that makes use of trees to compute pair-wise values. \citet{narasimhan2013structural} extended this to the partial AUC, or the area under the curve between two false positive rates (not necessarily 0 and 1). 
\citet{kotlowski2011bipartite} covered the difference between risk and regret in losses which sum over examples versus pairs. 
\citet{rudin2005margin} demonstrated losses which sum over examples can yield the same solution as a pairwise loss. 
\citet{pahikkala2007rank-pairwise-regularized-least-squares} proposed using an algorithm based on regularized least squares to minimize the number on incorrectly ranked pairs of data points. 
\citet{pahikkala_tsivtsivadze_airola_jarvinen_boberg_2009} followed up by proposing a learning algorithm that can be trained using pairwise class probabilities. 
A comprehensive review of ranking problems can be found in \citet{tino2022rankings-problems-review}.

\paragraph{Novelty with respect to previous work.} 
In this paper we propose a new algorithm for computing the square loss with respect to all pairs of positive and negative examples in log-linear time.
With respect to previous work, the novelty of our proposed algorithm can be understood in terms of four components (columns in Table~\ref{table:related-work}). 
These components include the polynomial degree of the loss, the inclusion of the hinge in the proposed loss function, whether or not a proof is presented in the previous paper, and what problem space the solution falls into. 
There are five previous papers which are most closely related to our proposed algorithm (rows in Table~\ref{table:related-work}). 
The \citet{joachims2005multivariate} paper proposes a faster computation method for the L1/linear degree loss function, while the other papers rely on L2/square or greater degree to create more disparity between correct and incorrect predictions.
The approach taken by the \citet{calders2007efficient} group is to compute a polynomial approximation of the square loss that can be computed efficiently. 
The \citet{pahikkala_tsivtsivadze_airola_jarvinen_boberg_2009}, \citet{ying2016stochastic}, and the \citet{calders2007efficient} works contain loss functions that do not make use of a hinge at the margin parameter.
The \citet{pahikkala_tsivtsivadze_airola_jarvinen_boberg_2009} paper also does not provide a proof for their proposed function. 
Finally, the LIBAUC group turn the computation of the square and squared hinge loss into a Min-Max problem  \citep{ying2016stochastic, yuan2020auc}. 
In summary, our paper is novel because it is the first to provide rigorous mathematical proof that the all pairs squared hinge loss can be computed in log-linear time, which is much faster than the quadratic time used by a na\" ive implementation.

\section{Models and Definitions}
\label{sec:model}
In this section we give a formal definition of the all pairs squared hinge loss.
For notation in this section we use $i$ for indices from 1 to $n$ (both positive and negative examples), whereas we use $j$ for only positive and $k$ for only negative examples.
\subsection{Supervised binary classification}
In supervised binary classification we are given a set of $n$ labeled training examples, $\{(\mathbf x_i, y_i)\}_{i=1}^n$ where $\mathbf x_i\in\mathbb R^p$ is an input feature vector and $y_i\in\{-1,1\}$ is a binary output/label.
Let $\mathcal I^{+}=\{j:y_j = 1\}$ be the set of indices of positive examples, let $n^+ = |\mathcal I^+|$ be the number of positive examples, and let $\mathcal I^{-}, n^-$ be the analogous set/number of negative examples.

\paragraph{Loss functions which sum over examples.} 
In typical balanced binary classification problems, we want to learn a function $f:\mathbb R^p\rightarrow \mathbb R$ that computes real-valued predictions $\hat y_i=f(\mathbf x_i)$ with the same sign as the corresponding label $y_i$.
Typical learning algorithms involve gradient descent using a convex surrogate loss function $\ell:\mathbb R\rightarrow \mathbb R_+$, which is summed over all training examples:
\begin{equation}
\label{eq:loss-sum-over-examples}
    \mathcal L(f) =  \sum_{i=1}^n \ell[ y_i f(\mathbf x_i) ].
\end{equation}
Large $y_i f(\mathbf x_i)>0$ values result in correctly predicted labels, whereas small $y_i f(\mathbf x_i)<0$ values cause incorrectly predicted labels.

\paragraph{Pairwise loss functions.} 
To more fairly compare different algorithms for binary classification, the Area Under the ROC Curve (AUC) is often used as an evaluation metric to maximize. 
Maximizing the AUC is equivalent to minimizing the Mann-Whitney test statistic \citet{bamber1975area}, which is defined in terms of a sum of indicator functions over all pairs of positive and negative examples.
Corresponding learning algorithms involve summing a convex surrogate $\ell$ over all pairs of positive and negative examples,
\begin{equation}
\label{eq:loss-sum-over-pairs}
    \mathcal L(f) =  
    \sum_{j\in\mathcal I^+}
    \sum_{k\in\mathcal I^{-}}
    \ell[ f(\mathbf x_j) - f(\mathbf x_k) ].
\end{equation}
Large pairwise difference values $f(\mathbf x_j) - f(\mathbf x_k) > 0$ result in correctly ranked pairs, whereas small pairwise difference values $f(\mathbf x_j) - f(\mathbf x_k) < 0$ cause incorrectly ranked pairs.

\paragraph{Choice of convex relaxation $\ell$.}
Both kinds of loss functions (summed over examples or pairs) depend on the choice of the $\ell:\mathbb R\rightarrow \mathbb R_+$ function.
Using the zero-one loss, $\ell(z)=I[z < 0]$, where $I$ is the indicator function (1 if true, 0 if false), means that (\ref{eq:loss-sum-over-examples}) counts incorrectly classified examples, whereas (\ref{eq:loss-sum-over-pairs}) counts pairs of positive and negative examples which are ranked incorrectly (positive example has a predicted value less than the negative example).
A typical choice for a convex relaxation of the zero-one loss is the logistic loss $\ell(z)=\log[1+\exp(-z)]$, which can be derived from the objective of maximizing the binomial likelihood.
Other convex relaxations are motivated by geometric rather than probabilistic arguments, and depend on a
margin size hyper-parameter $m\geq 0$ (which can be chosen via cross-validation, or kept at a default of $m=1$). 
A popular geometric convex surrogate used by the support vector machine is the (linear) hinge loss $\ell(z)=(m-z)_+$, where $(z)_+ = z I[z > 0]$ is the positive part function.
Other typical choices for geometric convex surrogates include the square loss $\ell(z)=(m-z)^2$ and the squared hinge loss $\ell(z)=(m-z)^2_+$, which we study in this paper in the pairwise context.

\subsection{Functional loss}
We propose to compute the double sum over pairs  (\ref{eq:loss-sum-over-pairs}) using a single sum over functions.
The function that computes the loss over all positive examples is:
\begin{equation}
\label{eq:L+}
    \mathcal L^+(x) = \sum_{j\in\mathcal I^+}
    \ell( \hat y_j - x ).
\end{equation}
The total pairwise loss (\ref{eq:loss-sum-over-pairs}) can be written using a sum over evaluations of this function,
\begin{equation}
\label{eq:loss_sum_pos_or_neg}
    \mathcal L(f) =  \sum_{k\in\mathcal I^{-}} \mathcal L^+(\hat y_k).
\end{equation}
Na\" ively using (\ref{eq:L+}) to compute each $\mathcal L^+(\hat y_k)$ value requires $O(n^+)$ time, which results in an overall time complexity of $O(n^+ n^-)$ for computing the loss value $\mathcal L(f)$.
Assuming that the number of positive/negative examples scales with the data set size, $n^+=O(n)$ and $n^-=O(n)$, then the overall time complexity of computing the loss value is quadratic $O(n^2)$, which is much too slow when learning with large data sets.
The main novelty of our paper is deriving alternative functional representations of $\mathcal L^+$ which require only constant $O(1)$ time to compute each loss value $\mathcal L^+(\hat y_k)$.
This results in an algorithm for computing the square loss which is linear $O(n)$, and an algorithm for computing the squared hinge loss which is log-linear $O(n\log n)$, as we explain in the next section.

\section{Algorithms}
\label{sec:algorithms}
In this section, we begin by proving that the all pairs square loss can be computed in linear time, and then prove that the all pairs squared hinge loss can be computed in log-linear time (whereas both are quadratic time using na\" ive implementations).
The main idea of our approach is to compute an exact representation of the loss function $\mathcal L^+$ which can be efficiently evaluated for each negative predicted value $\hat y_k$.
Because we are considering the special case of $\ell$ being the square loss or the squared hinge loss, we can use coefficients $a,b,c\in\mathbb R$ to represent a function,
\begin{equation}
    G_{a,b,c}(x) = ax^2 + bx + c.
\end{equation}
Then for every positive example $j\in\mathcal I^+$, we have a prediction $\hat y_j$, and if we pair this example with a negative example with predicted value $x$, then the loss of that pair can be computed using the function
\begin{equation}
    h_j(x) =  (x + m-\hat y_j)^2 = x^2 + 2(m- \hat y_j)x + (m-\hat y_j)^2 = G_{1,2(m-\hat y_j),(m-\hat y_j)^2}(x),
\end{equation}
where $m$ is the margin hyper-parameter.

The main insight of our paper is that this functional representation yields an efficient algorithm for computing the total loss over all positive examples,
\begin{equation}
\label{eq:L+k_functional}
  \mathcal L^+(x) = 
  \sum_{j\in\mathcal J(x,m)} h_j(x) = 
  \sum_{j\in\mathcal J(x,m)} G_{1,2(m-\hat y_j),(m-\hat y_j)^2}(x) =
  G_{A(x), B(x), C(x)}(x).
\end{equation}
where $\mathcal J(x,m)$ is the set of positive indices with non-zero loss, and $A(x), B(x), C(x)$ are the corresponding coefficients.
In the case of the square loss, $\mathcal J_{\text{square}}(x,m)=\mathcal I^+$ is the complete set of indices.
In the case of the squared hinge loss, $\mathcal J_{\text{squared hinge}}(x,m)=j:\hat y_j - x < m$ is the set of indices with non-zero loss.
The coefficients are defined by summing over all indices in the set,
\begin{eqnarray}
A(x) &=&  \sum_{j\in\mathcal J(x,m)} 1 = |\mathcal J(x,m)|, \label{eq:ax} \\
B(x) &=& \sum_{j\in\mathcal J(x,m)} 2(m-\hat y_j), \\
C(x) &=& \sum_{j\in\mathcal J(x,m)} (m-\hat y_j)^2.\label{eq:cx} 
\end{eqnarray}

Importantly, using ($\ref{eq:L+k_functional}$) to compute each $\mathcal L^+(\hat y_k)$ loss value requires only constant $O(1)$ time, given the coefficients $A(x), B(x), C(x)$.
As will be seen, 
The algorithm computes the total loss using $(\ref{eq:loss_sum_pos_or_neg})$, which requires a sum over all $n^-$ negative examples, so the best case time complexity would be $O(n^-)$.
However computing the coefficients requires either log-linear $O(n\log n)$ time for the squared hinge loss, or linear $O(n^+)$ time for the square loss, as shown below.

\subsection{Square loss (no hinge)}

The all pairs square loss has been proposed as a surrogate because it is easy to analyze theoretically \citet{ying2016stochastic,liu2019stochastic}.
This loss function entails using $\ell(z) = (m-z)^2$ in (\ref{eq:loss-sum-over-pairs}).

Our proposed algorithm for computing the square loss begins by computing three coefficients,
\begin{eqnarray}
a^+ &=&  \sum_{j\in\mathcal I^+} \underbrace{1}_{a_j} = n^+ = |\mathcal I^+|, \label{eq:a+} \\
b^+ &=& \sum_{j\in\mathcal I^+} \underbrace{2(m-\hat y_j)}_{b_j}, \\
c^+ &=& \sum_{j\in\mathcal I^+} \underbrace{(m-\hat y_j)^2}_{c_j}.\label{eq:c+} 
\end{eqnarray}
These three coefficients are an exact representation of the loss function $\mathcal L^+$,
\begin{equation}
    \mathcal L^+(x) = a^+ x^2 + b^+ x + c^+.
\end{equation}
The algorithm then evaluates this function for every negative prediction, and takes the sum of all the loss values,
\begin{equation}
\label{eq:func-square}
    \sum_{k\in\mathcal I^-} \mathcal L^+(\hat y_k) = \sum_{k\in\mathcal I^-} a^+ \hat y_k^2 + b^+ \hat y_k + c^+.
\end{equation}
The following theorem states that this functional representation can be used to efficiently compute the pairwise square loss:

\begin{theorem}
If $\ell$ is the square loss then the total loss over all pairs of positive and negative examples (\ref{eq:loss-sum-over-pairs}) can be computed via the functional loss (\ref{eq:func-square}), that is
\end{theorem}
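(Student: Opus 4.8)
The plan is to verify the claimed identity by direct algebraic manipulation, starting from the double sum (\ref{eq:loss-sum-over-pairs}) and arriving at the functional form (\ref{eq:func-square}). Since both sums are finite, every step is an elementary rearrangement, so the argument is a computation rather than a deep result; the only thing to be careful about is tracking which summation index each quantity depends on.

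First, I would substitute the square loss $\ell(z)=(m-z)^2$ with the pairwise argument $z = \hat y_j - \hat y_k$ into each term of the double sum. This gives $\ell[\hat y_j - \hat y_k] = (m - \hat y_j + \hat y_k)^2 = (\hat y_k + m - \hat y_j)^2$, which is exactly $h_j(\hat y_k)$, the function already introduced in the excerpt. Expanding the square yields $\hat y_k^2 + 2(m-\hat y_j)\hat y_k + (m-\hat y_j)^2$, matching $G_{1,\,2(m-\hat y_j),\,(m-\hat y_j)^2}(\hat y_k)$.

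Second, because the index set is $\mathcal J_{\text{square}}(x,m) = \mathcal I^+$ (there is no hinge, so every pair contributes), I would exchange the order of summation so that the sum over negative examples $k$ is on the outside and the sum over positive examples $j$ is on the inside. For a fixed negative example $k$, the value $\hat y_k$ is \emph{constant} with respect to the inner index $j$, so I can factor $\hat y_k^2$ and $\hat y_k$ out of the inner sum. What remains are precisely the coefficient sums $a^+ = \sum_{j\in\mathcal I^+} 1$, $b^+ = \sum_{j\in\mathcal I^+} 2(m-\hat y_j)$, and $c^+ = \sum_{j\in\mathcal I^+}(m-\hat y_j)^2$ defined in (\ref{eq:a+})--(\ref{eq:c+}). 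The inner sum thus collapses to $a^+ \hat y_k^2 + b^+ \hat y_k + c^+$, and summing this over $k \in \mathcal I^-$ reproduces (\ref{eq:func-square}).

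The step I expect to carry the real content is the factoring in the second part: the identity is only useful because the quadratic dependence on the negative prediction $\hat y_k$ separates cleanly from the sum over positive examples, letting the three coefficients be computed once and reused for each of the $n^-$ evaluations. There is no genuine obstacle here, since the result is an exact rewriting, but I would state explicitly that the interchange of sums is justified by finiteness, and double-check that the cross term $2(m-\hat y_j)\hat y_k$ is attributed correctly, so that the factor of $2$ belongs to $b^+$ and the coefficients match the definitions verbatim.
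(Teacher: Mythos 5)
Your proposal is correct and follows essentially the same route as the paper's proof: substitute $\ell(z)=(m-z)^2$, expand the square, and recognize the inner sum over $j\in\mathcal I^+$ as the coefficients $a^+,b^+,c^+$ applied to the fixed value $\hat y_k$. The paper phrases this as ``it suffices to show the identity for each fixed $k\in\mathcal I^-$,'' which is the same factoring step you carry out explicitly (including the trivially justified interchange of the finite sums).
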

\begin{equation}
    \sum_{k\in\mathcal I^{-}}
    \sum_{j\in\mathcal I^+}
    \ell( \hat y_j  - \hat y_k )
    =
\sum_{k\in\mathcal I^-} a^+ \hat y_k^2 + b^+ \hat y_k + c^+.
\end{equation}
\begin{proof}
It suffices to show that for any $k\in\mathcal I^-$ we have
\begin{eqnarray}
\sum_{j\in\mathcal I^+}
\ell( \hat y_j  - \hat y_k )
&=&
\label{eq:use-def-square-loss}
\sum_{j\in\mathcal I^+}
(m-\hat y_j  + \hat y_k)^2\\
&=&
\label{eq:use-expand-square}
\sum_{j\in\mathcal I^+}
(m-\hat y_j)^2 + 2(m-\hat y_j)\hat y_k + \hat y_k^2\\
&=& 
\label{eq:use-def-abc}
a^+ \hat y_k^2 + b^+ \hat y_k + c^+.
\end{eqnarray}
The proof above uses the definition of $\ell$ as the square loss with a margin of $m$ (\ref{eq:use-def-square-loss}), expands the square (\ref{eq:use-expand-square}), then uses the definitions of $a^+,b^+,c^+$ to complete the proof (\ref{eq:use-def-abc}).
\end{proof}

Figure~\ref{fig:geometric-square} demonstrates a geometric interpretation of how the proposed all pairs square loss is computed. The left panel displays how for every predicted value that has a label that that exists in $\mathcal{I^+}$, the coefficients $a_j, b_j, c_j$ are computed (\ref{eq:a+}--\ref{eq:c+}). 
The right panel shows that the precomputed coefficients can be summed, then for each predicted value that has a label in $\mathcal{I^-}$ the resulting curve can be evaluated and added to the total loss.

\begin{figure}[H]
    \centering
    \includegraphics[width = 0.9\textwidth]{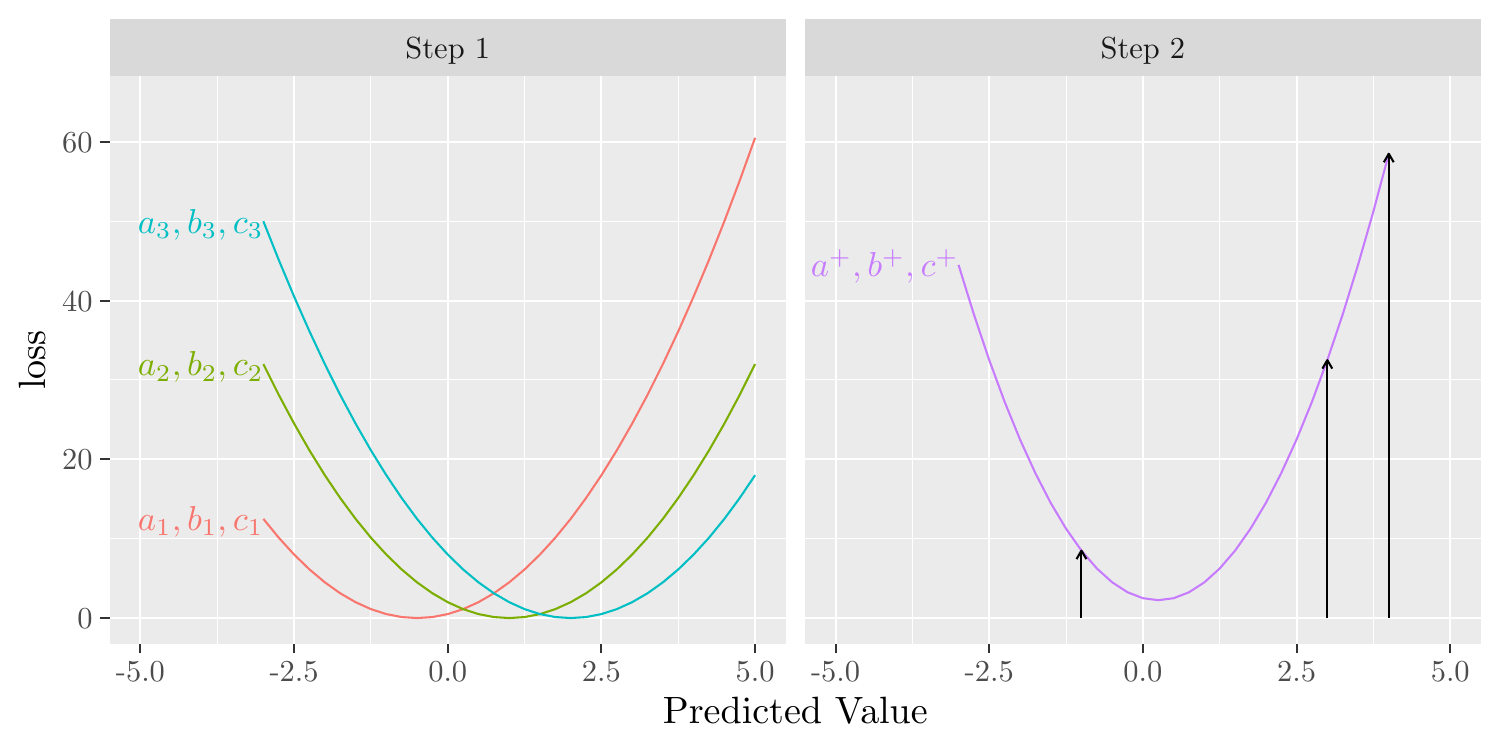}
    \caption{Geometric interpretation of how our proposed algorithm computes the all pairs square loss. \textbf{Left, Step 1:} for every predicted value associated with a positive label $j$, compute the coefficients $a_j, b_j, c_j$. 
    The red, green, and blue curves represent the square loss functions for three different observations $j$ with positive labels. 
    \textbf{Right, Step 2:} summing these coefficients over all observations $j$ with positive labels results in the purple curve, which represents the total square loss, that is evaluated for every positive label (black arrows).}
    \label{fig:geometric-square}
\end{figure}

\begin{algorithm}[H]
\caption{\label{alg:all-pairs-loss-square}All Pairs Square Loss (No hinge)}
\begin{algorithmic}[1]
\STATE Input: Predictions $\hat y_1,\dots, \hat y_n\in\mathbb R$, indices $\mathcal I^+,\mathcal I^-$, margin size $m\geq 0$.\label{line:inputs-square}
\STATE Initialize to zero the coefficients $a,b,c\in\mathbb R$ and loss $ L\in\mathbb R$.\label{line:init-coefs-loss-square}
\FOR{$j$ in $\mathcal I^+$}:\label{line:for-i-1-n-square}
    \STATE $z\gets m-\hat y_{j}$, $a\text { += } 1$, $b\text { += } 2z$, $c\text { += } z^2$ \label{line:coefs-square}
\ENDFOR
\FOR{$k$ in $\mathcal I^-$}:\label{line:for-i-1-n-square-second}
    \STATE $ L \text{ += } a\hat y_{k}^2 + b\hat y_{k} + c$.\label{line:loss-square}
\ENDFOR
\STATE Output: total loss $ L$.\label{line:output-square}
\end{algorithmic}
\end{algorithm}
Algorithm~\ref{alg:all-pairs-loss-square} summarizes the proposed method for computing the square loss over all pairs of positive and negative examples.
% It assumes input of index sets $\mathcal I^+,\mathcal I^-$ (line~\ref{line:inputs-square}), which can be computed in linear $O(n)$ time given the labels $y_1,\dots,y_n$.
% The coefficients and loss are initialized to zero (line~\ref{line:init-coefs-loss-square}), then the for loop over positive examples (line~\ref{line:for-i-1-n-square}) is used to update the coefficients (line~\ref{line:coefs-square}).
% The for loop over negative examples (line~\ref{line:for-i-1-n-square-second}) is then used to update the loss (line~\ref{line:loss-square}), which is the output of the algorithm (line~\ref{line:output-square}).
% Overall the time complexity of Algorithm~\ref{alg:all-pairs-loss-square} is linear, $O(n)$.

\subsection{Squared hinge loss}
The squared hinge loss is a preferable convex surrogate for AUC optimization because of its consistency properties \citep{yuan2020auc}.
This loss function involves using $\ell(z) = (m-z)^2_+$ in (\ref{eq:loss-sum-over-pairs}).
In this case, the algorithm begins by computing a predicted value for every $i\in\{1,\dots,n\}$ which is augmented only for negative examples by the amount of the margin hyper-parameter $m\geq 0$,
\begin{equation}
    v_i = \hat y_i + m I[y_i = -1].
\end{equation}
We then sort these augmented prediction values; let the indices $\{s_1,\dots, s_n\}$ be a permutation of $\{1,\dots,n\}$ such that $v_{s_1}\leq \cdots \leq v_{s_n}$.
These values can be used to write the loss function to evaluate for any negative example $k$ in the sequence,
\begin{equation}
    H_k(x) = \sum_{j=1}^k h_{s_j}(x) I[y_{s_j} = 1] = G_{a_k,b_k,c_k}(x). \label{eq:Hi}
\end{equation}
Note that the $H_k$ function can be efficiently represented in terms of the three real-valued coefficients $a_k,b_k,c_k$.
Our proposed algorithm starts with coefficients $a_0,b_0,c_0=0$ and loss $L_0=0$, then computes the following recursive updates for all $i\in\{1,\dots,n\}$,
\begin{eqnarray}
a_i &=& a_{i-1} + I[y_{s_i}=1], \label{eq:a} \\
b_i &=& b_{i-1} + I[y_{s_i}=1] 2(m-\hat y_{s_i}), \\
c_i &=& c_{i-1} + I[y_{s_i}=1] (m-\hat y_{s_i})^2,\label{eq:c} \\
L_i &=& L_{i-1} + I[y_{s_i}=-1] ( a_i \hat y^2_{s_i} + b_i \hat y_{s_i} + c_i).\label{eq:recursive-cost}
\end{eqnarray}

\begin{algorithm}[t]
\caption{\label{alg:all-pairs-loss-l2}Log-linear time computation of All Pairs Squared Hinge Loss}
\begin{algorithmic}[1]
\STATE Input: Predictions $\hat y_1,\dots, \hat y_n\in\mathbb R$, labels $ y_1,\dots,  y_n\in\{-1,1\}$, margin size $m\geq 0$.\label{line:inputs-alg-1}
\STATE Initialize to zero the coefficients $a,b,c\in\mathbb R$ and loss $ L\in\mathbb R$.\label{line:init-coefs-loss}
\STATE $v_i\gets \hat y_i + m I[y_i = -1]$ for all $i\in\{1,\dots,n\}$.\label{line:augment-predictions-alg-1}
\STATE $s_1,\dots,s_n\gets\textsc{SortedIndices}(v_1,\dots,v_n)$.\label{line:sort-alg-1}
\FOR{$i$ from 1 to $n$}:\label{line:for-i-1-n}
  \IF{$y_{s_i} = 1$}:
    \STATE $z\gets m-\hat y_{s_i}$, $a\text { += } 1$, $b\text { += } 2z$, $c\text { += } z^2$ \label{line:coefs}
  \ELSE
    \STATE $ L \text{ += } a\hat y_{s_i}^2 + b\hat y_{s_i} + c$.\label{line:loss}
  \ENDIF
\ENDFOR
\STATE Output: total loss $ L$ (gradients of $\hat y_1,\dots, \hat y_n$ can be computed using automatic differentiation)\label{line:output}
\end{algorithmic}
\end{algorithm}

The following theorem states that these update rules can be used to efficiently compute the pairwise squared hinge loss:
\begin{theorem}
If $\ell$ is the squared hinge loss then the total loss over all pairs of positive and negative examples (\ref{eq:loss-sum-over-pairs}) can be computed via the recursion  (\ref{eq:recursive-cost}), that is $
    \sum_{k\in\mathcal I^{-}}
    \sum_{j\in\mathcal I^+}
    \ell( \hat y_j  - \hat y_k ) 
    = 
    L_n.$
\end{theorem}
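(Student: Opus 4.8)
The plan is to collapse the double sum into a single left-to-right pass over the sorted sequence, by showing that at the moment a negative example is processed, the accumulated coefficients $(a_i,b_i,c_i)$ encode exactly the positive examples that incur nonzero squared hinge loss against it. The first step is to record the algebraic consequence of the augmentation $v_i=\hat y_i+mI[y_i=-1]$. For a positive index $j$ and a negative index $k$, the margin condition $m-(\hat y_j-\hat y_k)>0$ is equivalent to $v_j<v_k$, and in that case $\ell(\hat y_j-\hat y_k)=(\hat y_k+m-\hat y_j)^2=h_j(\hat y_k)$; conversely, when $v_j\geq v_k$ the squared hinge vanishes. The crucial observation is that at a tie $v_j=v_k$ we have $h_j(\hat y_k)=(v_k-v_j)^2=0$, so whether or not a tied positive example is counted makes no difference to the evaluated value. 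This lets me write, for every negative $k$, $\sum_{j\in\mathcal I^+}\ell(\hat y_j-\hat y_k)=\sum_{j\in\mathcal I^+,\,v_j<v_k}h_j(\hat y_k)$, where the strict inequality may be freely relaxed to $v_j\leq v_k$.

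Next I would verify, by induction on the sorted position $i$, that the recursive updates (\ref{eq:a})--(\ref{eq:c}) maintain the invariant $(a_i,b_i,c_i)=\sum_{j=1}^i I[y_{s_j}=1]\,(1,\,2(m-\hat y_{s_j}),\,(m-\hat y_{s_j})^2)$, so that $G_{a_i,b_i,c_i}=H_i$ as defined in (\ref{eq:Hi}). This is a routine telescoping argument, since each positive step adds exactly the coefficients of one $h_{s_j}$ and each negative step leaves $a,b,c$ unchanged (the increment $I[y_{s_i}=1]$ is zero).

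The core step is then to evaluate $H_i$ at a negative example. Fix $i$ with $y_{s_i}=-1$ and set $k=s_i$. Because the sequence is sorted so that $v_{s_1}\leq\cdots\leq v_{s_n}$, every positive $j$ with $v_j<v_k$ occupies a position strictly less than $i$ and is therefore included in the sums $a_i,b_i,c_i$, while every positive $j$ with $v_j>v_k$ occupies a position greater than $i$ and is excluded; positive examples tied at $v_j=v_k$ may fall on either side but contribute $h_j(\hat y_k)=0$. Combining this with the first step gives $a_i\hat y_k^2+b_i\hat y_k+c_i=H_i(\hat y_k)=\sum_{j\in\mathcal I^+,\,v_j\leq v_k}h_j(\hat y_k)=\sum_{j\in\mathcal I^+}\ell(\hat y_j-\hat y_k)=\mathcal L^+(\hat y_k)$. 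Since (\ref{eq:recursive-cost}) increments $L$ by precisely this quantity at each negative position and by zero elsewhere, telescoping the recursion yields $L_n=\sum_{i:\,y_{s_i}=-1}\mathcal L^+(\hat y_{s_i})=\sum_{k\in\mathcal I^-}\sum_{j\in\mathcal I^+}\ell(\hat y_j-\hat y_k)$ via (\ref{eq:loss_sum_pos_or_neg}), which is the claim.

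I expect the main obstacle to be the tie-handling in the core step: one must argue carefully that the \emph{strict} margin inequality defining $\mathcal J_{\text{squared hinge}}$ is correctly matched by the \emph{non-strict} ordering produced by the sort. The clean resolution is the identity $h_j(\hat y_k)=0$ whenever $v_j=v_k$, which renders the boundary cases harmless regardless of how the sorting routine breaks ties, and this is what makes the single sorted pass exact rather than merely approximate.
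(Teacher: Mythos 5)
Your proof is correct, and it follows the same overall strategy as the paper's: sort the augmented values $v_i$, accumulate the coefficients $(a_i,b_i,c_i)$ for the prefix of positive examples, and telescope over the sorted sequence to obtain $L_n$. The difference is one of rigor at exactly the crucial step. The paper's proof asserts in its first display that the double sum equals $\sum_{k=1}^n I[y_{s_k}=-1]\,H_k(\hat y_{s_k})$, and later justifies~(\ref{eq:squared-hinge}) only with the phrase ``by only summing over terms $j$ such that the squared hinge loss is non-zero''; the precise reason why the sorted prefix of positive examples coincides with the set of positives incurring non-zero loss against a given negative is left implicit. Your proposal supplies this missing lemma explicitly: for positive $j$ and negative $k$, the strict margin condition $\hat y_j - \hat y_k < m$ is equivalent to $v_j < v_k$, and a tie $v_j = v_k$ forces $h_j(\hat y_k) = (\hat y_k + m - \hat y_j)^2 = 0$, so the non-strict ordering produced by the sort computes the same value no matter how the sorting routine breaks ties. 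This tie-handling observation is what makes the single sorted pass exact, and it appears nowhere in the paper; your argument also sidesteps the notational slips in the paper's induction step (the stray indicator $I[y_i=1]$ and the mismatched summation bounds in (\ref{eq:squared-hinge})--(\ref{eq:sum-sorted-indices})). In short: same decomposition and same recursion, but your write-up is the more complete and careful version of the argument.
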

\begin{proof}
We first re-write the cost in terms of the sorted indices, using the definition of $H_i$~(\ref{eq:Hi}),
\begin{equation}
    \sum_{k\in\mathcal I^{-}}
    \sum_{j\in\mathcal I^+}
    \ell( \hat y_j  - \hat y_k )
    =
    \sum_{k=1}^n
    I[y_k = -1]
    \sum_{j=1}^n
    I[y_j = 1]
    \ell( \hat y_j  - \hat y_k )
    =
    %\sum_{k\in\mathcal I^-} \mathcal H_k^+(\hat y_k) =
    \sum_{k=1}^n 
    I[y_{s_k} = -1]
    H_k(\hat y_{s_k}) .
\end{equation}
The proof is by induction. 
\paragraph{Base Case.} For the base case, when $y_{s_1}=1$ we have $I[y_{s_1}=-1]=0$ in~(\ref{eq:recursive-cost}, so $L_1 = 0$.
When $y_{s_1}=-1$ we have $a_1,b_1,c_1=0$ in~\ref{eq:recursive-cost}, which implies $L_1=0.$ 
The initial pairwise loss is indeed zero because there are no pairs to sum over (there is only one example, either positive or negative), so that proves the base case.

\paragraph{Induction.} 
We now assume that $L_t =\sum_{k=1}^t I[y_{s_k} = -1] H_k(\hat y_{s_k}) $ is true for all $t<n$ (induction hypothesis). 
If $y_{s_n}=1$ we have $I[y_{s_n}=-1]=0$ in~(\ref{eq:recursive-cost}), so
\begin{eqnarray}
L_n &=& L_{n-1}, \label{eq:use-recursive-cost} \\
 &=& \sum_{k=1}^{n-1} I[y_{s_{k}} = -1] H_k(\hat y_{s_{k}}), \label{eq:induction-assumption} \\
 &=& \sum_{k=1}^{n} I[y_{s_{k}} = -1] H_k(\hat y_{s_{k}}). \label{eq:label-is-positive}
\end{eqnarray}
Equation~(\ref{eq:use-recursive-cost}) follows from the last label being positive in the definition of the recursive loss~(\ref{eq:recursive-cost}).
Equation~(\ref{eq:induction-assumption}) follows from the induction hypothesis.
Equation~(\ref{eq:label-is-positive}) changes the sum up to $n-1$ to a sum up to $n$, which follows from the last label being positive, which proves that case.

The other case is when the last label is negative, $y_{s_n}=-1$, for which we have
\begin{eqnarray}
H_n(\hat y_{s_n}) 
&=& \sum_{j=1}^n\label{eq:squared-hinge}
\hat y_{s_n}^2 + 2(m-\hat y_j)\hat y_{s_n} + (m-\hat y_j)^2, \\
&=& \sum_{j=1}^{n-1} I[y_i = 1] \left[
\hat y_{s_n}^2  + 2(m-\hat y_{s_j}) \hat y_{s_n} + (m-\hat y_{s_j})^2 \label{eq:sum-sorted-indices}
\right], \\
&=& \label{eq:use-abc} a_{n-1} \hat y_{s_n}^2 + b_{n-1} \hat y_{s_n} + c_{n-1},\\
&=& a_{n} \hat y_{s_n}^2 + b_{n} \hat y_{s_n} + c_{n}.\label{eq:last-abc-zero}
\end{eqnarray}
The equality~(\ref{eq:squared-hinge}) follows from the definition of $\mathcal L$ and by only summing over terms $j$ such that the squared hinge loss is non-zero.
The equality~(\ref{eq:sum-sorted-indices}) follows from the definitions of $\mathcal I^+$ and the sorted indices $s_1,\dots,s_n$.
Finally, (\ref{eq:use-abc}) follows from the definitions of $a_i,b_i,c_i$ in (\ref{eq:a}--\ref{eq:c}), and 
(\ref{eq:last-abc-zero}) follows from the assumption that $y_{s_n}=-1$.
Using the induction hypothesis finishes the proof.
\end{proof}

Algorithm~\ref{alg:all-pairs-loss-l2} summarizes the proposed method for computing the squared hinge loss over all pairs of positive and negative examples.
It inputs a vector of predictions $\hat y_1,\dots,\hat y_n\in\mathbb R$, corresponding labels $y_1,\dots,y_n\in\{-1,1\}$, as well as a margin size parameter $m\geq 0$ (line~\ref{line:inputs-alg-1}).
The algorithm begins by initializing the loss as well as the three coefficients to zero (line~\ref{line:init-coefs-loss}), then computing augmented predictions $v_i$ (line~\ref{line:augment-predictions-alg-1}), and sorting them (line~\ref{line:sort-alg-1}). 
In each iteration of the for loop from smallest to largest augmented predicted values (line~\ref{line:for-i-1-n}), either the three coefficients are updated if the label is positive (line~\ref{line:coefs}), or they are used to update the loss if the label is negative (line~\ref{line:loss}).
Upon completion the algorithm the outputs the total loss value $L$ (line~\ref{line:output}).
Note that working in the ``forward'' direction (from smallest to largest augmented predicted value) means that Algorithm~\ref{alg:all-pairs-loss-l2} sums the $\mathcal L^+$ function over all negative examples; we could equivalently compute the loss by working  ``backward'' (from largest to smallest augmented predicted value) via summing the $\mathcal L^-$ function over all positive examples (the proof and pseudo-code are analogous).
Overall the space complexity of Algorithm~\ref{alg:all-pairs-loss-square} is $O(n)$ and the time complexity is $O(n\log n)$ due to the sort (line~\ref{line:sort-alg-1}).

% If automatic differentiation is Not available, then the gradient can be computed via Algorithm~\ref{alg:all-pairs-grad-l2}. The first for loop is over the two directions, forward and backward (line~\ref{line:for-direction}).
% For each direction the two coefficients are first initialized to zero (line~\ref{line:init-zero-coefs}), then there is a loop over the sorted data (line~\ref{line:for-first-last}).
% Each iteration either updates the two coefficients (line~\ref{line:coefs-grad}), or uses them to compute an element of the gradient vector (line~\ref{line:grad}).
% The output is the gradient vector, with $n$ elements (line~\ref{line:output-grad}).
% Algorithm~\ref{alg:all-pairs-grad-l2} requires $O(n)$ space and $O(n\log n)$ time.

\section{Empirical Results}
\label{sec:results}

\subsection{Proposed algorithm results in order of magnitude speedups}

To explore how the speed of our method compares to the logistic loss and the na\"ive approach (brute force sum over all pairs), we performed the following experiment which compares the computation times of the different methods.
We expected that there should be large empirical speedups for our proposed algorithm, consistent with its theoretical log-linear time complexity (much faster than the quadratic time na\"ive approach).
For each data size $n\in\{10^1,\dots,10^7\}$ we simulated $n$ standard normal random numbers to use as predictions $\hat y_1,\dots,\hat y_n$, and used an equal number of positive and negative labels.
We then measured the time to compute each loss value and gradient vector,
using a laptop with a 2.40GHz Intel(R) Core(TM)2 Duo CPU P8600.
We report the computation times as a function of data size $n$ in Figure~\ref{fig:timing-grad-square}.
``Na\"ive'' denotes the brute force method of summing over all pairs of positive and negative examples, while ``Functional" refers to the algorithms proposed in section \ref{sec:algorithms}. 
We observed on the log-log plot that the na\"ive approaches have a much larger asymptotic slope than the Functional methods, which is consistent with the theoretical expectation (quadratic for na\"ive, log-linear for Functional).
In particular we observed that in 1 second, the na\"ive approach is capable of computing the loss value and gradient vector for data on the order of $n\approx 10^3$ examples, whereas the Functional approach allows for greater sizes of up to $n\approx 10^6$ in the same amount of time.
In our timings, we observed that the na\"ive method is substantially slower for $\approx 1000$ examples or more.
In addition we observed that the Functional methods have a similar asymptotic slope as the logistic loss, which indicates that our proposed method is extremely fast (log-linear), almost as fast as the linear time logistic loss.
Of course the exact timings we observed in this experiment are dependent on the computer hardware that was used, but overall our analysis suggests our proposed algorithm results in substantial asymptotic speedups over the na\"ive method, and therefore allows for learning using much larger batch sizes than were previously practical.

\begin{figure}
    \centering
    \includegraphics[width=0.8\textwidth]{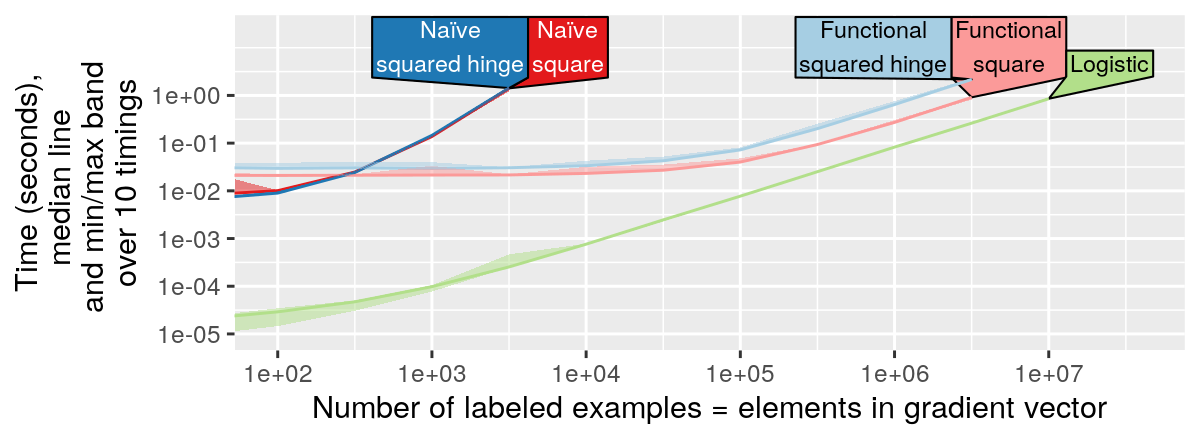}
    \vspace{-0.3cm}
    \caption{Proposed $O(n\log n)$ Functional algorithms for computing all pairs loss and gradient is asymptotically faster than compared with previous Na\"ive $O(n^2)$ algorithms, and almost as fast as $O(n)$ logistic loss. }
    \label{fig:timing-grad-square}
\end{figure}

\subsection{Experimental Details: data sets, splits, hyper-parameter grids}
In this section we will discuss how our algorithms compared to state-of-the-art methods on several machine learning benchmark data sets. The data sets utilized in these experiments consisted of CIFAR10, STL10, and Cat\&Dog \cite{Krizhevsky09learningmultiple} \cite{asirra-a-captcha-that-exploits-interest-aligned-manual-image-categorization}
\cite{coates2011stl10}.
These data sets were split first using their pre-defined standard train/test sets (each test set has no class imbalance, 50\% positive labels).
The CIFAR10 and Cat\&Dog data sets are typically utilized in multi-class problems, and therefore were converted into binary classification problems as described previously \cite{yuan2020auc}. 
Briefly, we set aside the first half of the class labels (ex. $y_i\:\in\:\{0,1,2,3,4\}$) as the negative class and the remaining labels ($y_i\:\in\:\{5,6,7,8,9\}$) were designated as the positive class.  
In order to achieve the desired train set class imbalance ratio (imratio = proportion of positive labels in train set = 0.1, 0.01, or 0.001), observations associated with positive examples were removed from the data set until the desired class imbalance was achieved.
From there the train set was split again into 80\% subtrain set (for computing gradients) and 20\% validation set (for hyper-parameter selection). 
In the following figures the experiments will be denoted as follows: ``train loss function + optimizer." 
Squared Hinge will refer to our new algorithm for computing the squared hinge loss. 
LIBAUC will refer to the AUCM loss \cite{yuan2020auc} and PESG will refer to the optimization algorithm proposed in \cite{guo2020pesg}. 
For comparison we also used the standard Logistic (binary cross entropy) loss function, with equal weights for each observation (this baseline is how most binary classifiers are trained without class imbalance / no special optimization for AUC).
In the following results, we report the mean/median across five different random seeds, each with a different random initialization of the neural network weights, and a different random subtrain/validation split.

In each of our experiments we used a pytorch implementation of the ResNet20 model; this is a deep neural network with 20 hidden layers, and a sigmoid last activation layer \citep{ResNet20}. 
We used the LIBAUC algorithm as a baseline \cite{yuan2020auc}, and that algorithm recommends using the sigmoid as the last activation, so we used that in our experiments as well in order to make a reasonable comparison.

In order to achieve maximum performance for each loss function and optimizer, each combination of train loss function + optimizer was trained over a grid of hyper-parameters, and the parameter combination and number of epochs that achieved the maximum validation AUC was selected.
The batch sizes were selected from $10,50,100,500,1000,5000$.
The learning rates tested were dependent on the loss function. For the LIBAUC and logistic loss functions the tested learning rates were $10^{-4},\dots,10^2$.
For the proposed square hinge loss the learning rates were tested across $10^{-4},\dots,10^{-1}$. 
We observed that if the learning rate is too large, when summing over all pairs of positive and negative examples, the resulting loss value very quickly diverges and will run into overflow problems. 
All batch sizes and learning rates were computed in parallel on a cluster in which each node had AMD EPYC 7542 CPUs.

\subsection{Selected hyper-parameter values when maximizing validation AUC}
In this section we wanted to investigate which hyper-parameters (batch size, learning rate) were selected by maximizing validation AUC, in order to investigate the extent to which large batch sizes are useful.
% It has been established in previous sections that the time complexity of our proposed method is greatly improved compared to the na\"ive square and squared hinge loss functions. 
Using the hyper-parameter grids described in the previous section, we wanted to examine if there are situations where it would be beneficial to select larger batch sizes that were not feasible using the quadratic time na\"ive methods. 
We expected that as the ratio of positive to negative labels decreased, it would be advantageous for the learning algorithm to examine more observations at once (so that each batch would have at least one example for each class). 
It can be observed from Table~\ref{tab:batch-size-lr-table} that  large batch sizes were selected for our proposed algorithm in several situations.
For example, using the Cat\&Dog data set and a proportion of positive examples of 0.001 in the train set, selecting the epoch with maximum validation AUC resulted in a median batch size of 1000 (over the five random initializations of the neural network weights). 
Attempting to use a batch size this large using the na\"ive method would have been very slow, given its time complexity of $O(n^2)$.
Overall these data provide convincing evidence that our proposed algorithm allows for efficient learning using large batch sizes, which are useful when the labels are highly imbalanced.
\begin{table}[t]
  \centering
  \resizebox{\textwidth}{!}{%
    \begin{tabular}{|c|l|r|r|r|r|r|r|}
    \hline
          &       & \multicolumn{2}{c|}{CIFAR10} & \multicolumn{2}{c|}{STL10} & \multicolumn{2}{c|}{Cat\&Dog} \\
    \hline
    \multicolumn{1}{|l|}{Imratio} & \multicolumn{1}{l|}{Loss Function}      & \multicolumn{1}{l|}{Batch} & \multicolumn{1}{l|}{Learning Rate} & \multicolumn{1}{l|}{Batch} & \multicolumn{1}{l|}{Learning Rate} & \multicolumn{1}{l|}{Batch} & \multicolumn{1}{l|}{Learning Rate} \\
    \hline
    \multirow{3}[2]{*}{0.1} & Our Square Hinge & 10    & 0.0316 & 10    & 0.0100 & 50    & 0.1000 \\
\cline{2-8}          & LIBAUC & 50    & 0.1000 & 50    & 0.1000 & 50    & 0.1000 \\
\cline{2-8}          & Logistic Loss & 10    & 0.1000 & 50    & 0.1000 & 50    & 1.0000 \\
    \hline
    \multirow{3}[2]{*}{0.01} & Our Square Hinge & 10    & 0.0032 & 100   & 0.1000 & 50    & 0.0316 \\
\cline{2-8}          & LIBAUC & 50    & 0.1000 & 1000  & 0.1000 & 100   & 0.1000 \\
\cline{2-8}          & Logistic Loss & 10    & 0.1000 & 1000  & 0.1000 & 100   & 1.0000 \\
    \hline
    \multirow{3}[2]{*}{0.001} & Our Square Hinge & \textbf{500}   & 0.0316 & 10    & 0.0001 & \textbf{1000}  & 0.3162 \\
\cline{2-8}          & LIBAUC & 100   & 10.0000 & 10    & 0.0001 & 500   & 10.0000 \\
\cline{2-8}          & Logistic Loss & 100   & 1.0000 & 100   & 0.0001 & 100   & 1.0000 \\
    \hline
    \end{tabular}%

    }
  \caption{Hyper-parameters which were selected by maximizing AUC on validation set (Median hyper-parameters over five random initializations of neural network). Batch column = batch size, Imratio column shows the proportion of positive examples in the train set. Bold for larger batch sizes which were selected using our proposed method.}
  \label{tab:batch-size-lr-table}
\end{table}

\subsection{Proposed Algorithm Has Similar or Better Test AUC In Cross Validation Experiments}

In this section we wanted to explore how much our proposed method  improves prediction accuracy on real data sets using a deep learning model (ResNet20). 
We expected that with the proposed loss function, we would be able to achieve similar or larger AUC values on the test set.

In Figure~\ref{fig:test-auc-at-max-valid-epoch}, it can be observed that our proposed algorithm performs as well as both baselines on data sets with 10\% positive examples in the train set (Imratio=0.1). One can see from Figure~\ref{fig:test-auc-at-max-valid-epoch} that the logistic loss performs well on data sets that are more balanced, but starts to fail once the ratio of positive to negative labels in the train set begins to decrease. 
Our proposed algorithm outperforms both baselines at the 0.01 class imbalance ratio. 
Using our proposed square hinge loss at this class imbalance consistently resulted in larger median test AUC values, in all three data sets.
At the class imbalance ratio of 0.001 all of the loss functions begin to struggle to detect the patterns in the data set (test AUC values only slightly larger than 0.5), but our proposed method does see some potential benefits when applied to the Cat\&Dog and CIFAR10 data sets.

\begin{figure}[t]
    \centering
    \includegraphics[width=1\textwidth]{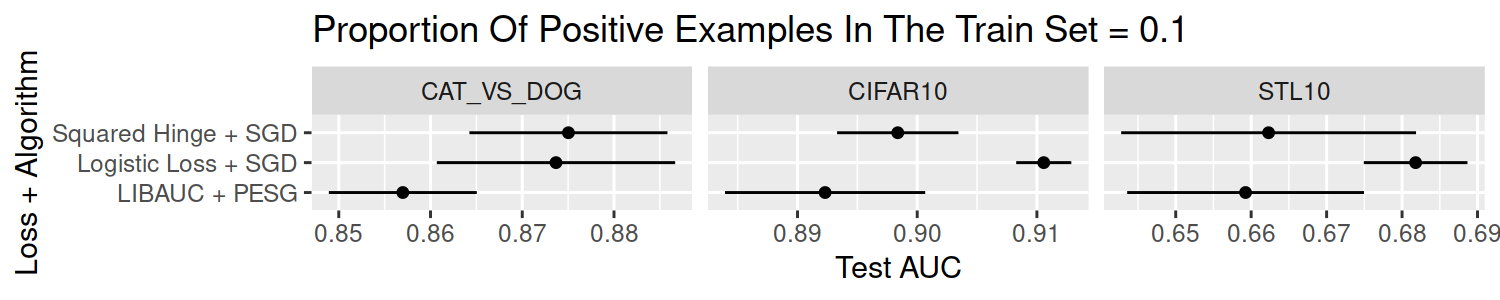}
    \includegraphics[width=1\textwidth]{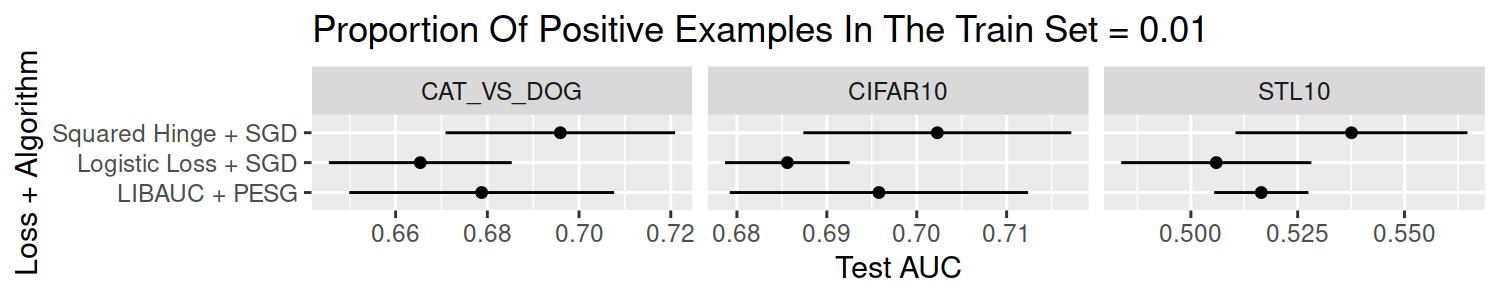}
    \includegraphics[width=1\textwidth]{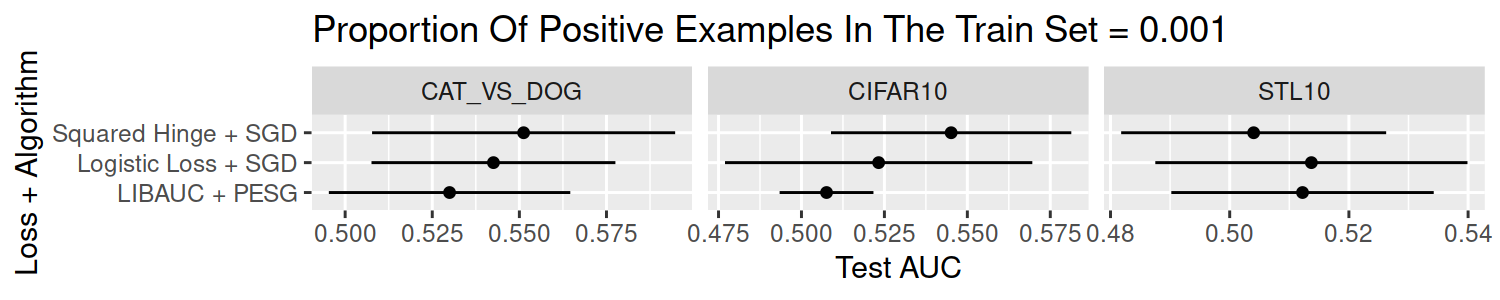}

    \caption{The AUC achieved on the test set (with no class imbalance) at the epoch that achieved the maximum AUC on the validation set (mean $\pm$ standard deviation over five random initializations of the neural network weights).}
    %png("figure.png", width=4, height=2, units="in", res=200)
    %print(gg)
    %dev.off()
    %imratio -> proportion of positive examples in train set
    \label{fig:test-auc-at-max-valid-epoch}
\end{figure}

\section{Discussion and conclusions}
\label{sec:discussion}
The main novel contribution of this paper is a log-linear time algorithm for computing the all pairs squared hinge loss, which has not been previously described.
Section~\ref{sec:algorithms} presented new formal proofs that a functional representation can be used to efficiently compute the all pairs square loss and squared hinge loss. 
We described new algorithms that can compute the square loss in linear $O(n)$ time, and the squared hinge loss in log-linear $O(n\log n)$ time (both would be quadratic time using a na\" ive implementation).

In our empirical timings experiment, we demonstrated how our proposed method compares to the na\"ive approach and the standard logistic loss. These algorithms have asymptotic time complexities of $O(n^2)$ and $O(n)$ respectively. 
It can be observed that our proposed algorithm's asymptotic timing vastly outperforms the na\"ive method, and is nearly as fast as the linear logistic loss. Our analysis shows that when a time limit of one second is enforced, the na\"ive method can complete the computation for data on the order of $n \approx 10^3$, while our proposed representation can compute $n \approx 10^6$ in the same amount of time.

In our second comparison experiment, we demonstrated using real-world data sets that there are cases when large batch sizes are selected by maximizing the AUC on the validation set. 
Our sub-quadratic algorithm is able to make use of larger batch sizes than previously feasible with an algorithm of quadratic time complexity. 
For a ratio of positive examples in the train set of 0.001, our method selected batch sizes of 500 and 1000 on the CIFAR10 and Cat\&Dog data sets respectively.

In our third comparison experiment, we compare the test AUC of our algorithm with both standard logistic loss and the state-of-the-art LIBAUC loss. 
It can be observed that our proposed method performs on par with these methods of low level of class imbalance. 
When the level of class imbalance is very high, none of the given loss functions do too well with capturing patterns in the data, but our method still allows the user to achieve a slightly higher AUC value. Our method clearly outperforms the other methods when the class imbalance is moderate across all three data sets (imratio=0.01 proportion of positive examples in train set), thus from our experiments this would be the most advantageous situation to utilize our proposed method.

A final potential advantage to our proposed method lies in interpretability. 
Our square hinge loss can be computed with respect to the entire subtrain/validation sets during each epoch of training, in the same $O(n\log n)$ time that it takes to compute AUC.
Our algorithm therefore makes it feasible to regularly monitor these quantities and use them to more easily diagnose problems in neural network training (for example, looking at subtrain loss to see if step size is too large/small).
% As described in \ref{sec:model}, our method's objective was to minimize the loss as opposed to directly maximizing AUC. 
% This allows the user to directly monitor the loss for the characteristic ``U" shape of overfitting on the validation set, which is not possible when using a method that directly maximizes the AUC.

In the future, we would like to investigate how our functional representation could be used when computing the linear hinge loss, which has non-differentiable points, so we could make use of sub-differential analysis. 
We would also like to explore for what problems it is advantageous to use a sigmoid versus linear last activation in neural networks with the loss functions we have proposed. 
We would like to explore how our method could be used with full batch sizes and deterministic optimization algorithms such as the Limited Memory Broyden–Fletcher–Goldfarb–Shanno (LBFGS) optimizer \citep{liu1989limited}. 
We expect that for problems where there exists a bad condition number, LBFGS with full batch size should out-perform Stochastic Gradient Descent with small batch sizes.
Finally, we would be interested to investigate how our method could be combined with the Stochastic Average Gradient algorithm in order to obtain accelerated convergence \citep{schmidt2017minimizing}.

\bibliographystyle{plainnat}
\bibliography{refs}

\newpage

\end{document}